\documentclass[11pt]{article}

\usepackage[letterpaper, margin=1in]{geometry}
\usepackage[utf8]{inputenc}
\usepackage[T1]{fontenc}
\usepackage{mathptmx}
\PassOptionsToPackage{numbers, compress}{natbib}
\usepackage{natbib}
\usepackage{hyperref}
\usepackage{url}
\usepackage{booktabs}
\usepackage{amsfonts}
\usepackage{amsmath}
\usepackage{amssymb}
\usepackage{nicefrac}
\usepackage{microtype}
\usepackage{xcolor}
\usepackage{graphicx}
\usepackage{algorithm}
\usepackage{algpseudocode}
\usepackage{makecell}
\usepackage[capitalise]{cleveref}
\usepackage{amsthm}
\usepackage{enumitem}
\usepackage{titling}

\hypersetup{colorlinks=true, linkcolor=black, citecolor=black, urlcolor=blue}

\newtheorem{lemma}{Lemma}
\newtheorem{theorem}{Theorem}

\title{\Large\textbf{Cosine-Gated Adam-Decay:\\
Drop-In Staleness-Aware Outer Optimization for Decoupled DiLoCo}}

\author{Vatsal Shah\\
FLock.io\\
\texttt{vatsal@flock.io}
\and
Jiahao Sun\\
FLock.io\\
\texttt{sun@flock.io}}

\date{May 2026}

\begin{document}
\maketitle
\thispagestyle{empty}

\begin{abstract}
Asynchronous DiLoCo systems may receive pseudo-gradients computed
several outer rounds earlier, yet the standard Nesterov outer
optimizer does not explicitly condition its update on per-update
age. This can make the outer momentum buffer brittle under large
controlled delays. We propose \emph{Cosine-Gated Adam-Decay}
(CGAD), a simple, drop-in, age-aware outer optimizer that scales
each incoming pseudo-gradient by
$\sigma(\tau) = \gamma(\tau)\,e^{-\alpha\tau}$ before it enters
Adam's first- and second-moment buffers; the exponential models
information decay and the cosine gate $\gamma(\tau)$ smoothly zeroes
contributions past a chosen cutoff. CGAD reduces to plain Adam at
$\tau{=}0$, adds two hyperparameters whose defaults transfer across
scales, and extends to partial-sync schedulers via a per-fragment
age-aware variant (PA-CGAD). For an idealized gated-adaptive update
on smooth non-convex objectives, we prove a non-asymptotic
convergence bound whose staleness-bias term depends on $\alpha$
alone, rather than on the realized maximum delay $\tau_{\max}$;
standard analyses of asynchronous momentum-SGD instead carry a
$\tau_{\max}^2$ factor. Empirically, on Llama-style language-model
pretraining at 25\,M, 1\,B, and 7\,B parameters, CGAD trains stably
across the controlled delays we sweep. The cosine cutoff acts as
\emph{scale insurance}: the closest baseline, Adam-Decay (CGAD
without the cutoff), is competitive at 25\,M but its seed-to-seed
$\sigma$ at $\tau{=}8$ grows $27\times$ from 25\,M to 7\,B,
pushing its single-shot risk (mean $+\,\sigma$) above the
chance-level loss while CGAD's stays well below. The published
Nesterov recipe is the least stable method on the full sweep.
\end{abstract}

\section{Introduction}

Pretraining large language models has moved away from a single
tightly-coupled cluster. The DiLoCo family
\citep{douillard2024diloco,streaming_diloco2025,opendiloco2024,
kale2025eager,ddiloco2026} lets workers run many local SGD or AdamW
steps before exchanging an aggregated pseudo-gradient with a global
parameter server, opening the door to training on heterogeneous,
geographically distributed, or partially unreliable hardware.
Decoupled DiLoCo's chaos-engineering experiments \citep{ddiloco2026}
report healthy training on a 2.4-million-chip cluster with a
sub-minute mean time between failures.

In any such system, pseudo-gradients arrive with delay. The
pseudo-gradient applied at outer round $t$ may have been computed
$\tau$ rounds earlier, where $\tau$ depends on network conditions,
slow learners, and the asynchrony budget. The outer optimizer used
across the DiLoCo family is SGD with Nesterov momentum
\citep{douillard2024diloco}, retained unchanged in Decoupled DiLoCo
\citep{ddiloco2026}. The recipe applies the same outer step to every
incoming pseudo-gradient, with no accounting for $\tau$. As $\tau$
grows, momentum accumulates a stale view of the loss landscape into
the velocity buffer, and the step overshoots directions that no
longer descend the current loss.

We propose \emph{Cosine-Gated Adam-Decay} (CGAD), a one-line
modification of Adam at the outer step. Each incoming pseudo-gradient
is scaled by $\sigma(\tau) = \gamma(\tau)\,e^{-\alpha\tau}$ (an
exponential information-decay term times a cosine gate that drops to
zero at a chosen cutoff $\tau_\text{cut}$) before it enters Adam's
moment buffers. CGAD adds two hyperparameters whose defaults transfer
across scales, reduces to plain Adam at $\tau{=}0$, and drops in
wherever Nesterov OuterOpt is currently used. We make three
contributions:
\begin{enumerate}[leftmargin=*,topsep=2pt,itemsep=2pt]
\item \textbf{A non-asymptotic convergence bound for an idealized
gated-adaptive update on smooth non-convex objectives}
(\Cref{thm:rate}), whose staleness-bias term is
$\mathcal{O}(1/(e\alpha))$ and depends on the gating hyperparameter
$\alpha$ rather than on the realized maximum delay
$\tau_{\max}$. Standard analyses of asynchronous momentum-SGD
\citep{lian2015async} carry a $\tau_{\max}^2$ factor in the
matching place; the result is intended as motivation for the
gating mechanism rather than as a full Adam-convergence theorem.
\item \textbf{A per-fragment-age variant (PA-CGAD)} that lets the
gate share a single staleness model with partial-sync schedulers
such as Streaming DiLoCo's fragment streaming
\citep{streaming_diloco2025}.
\item \textbf{Llama-style pretraining experiments at 25\,M, 1\,B,
and 7\,B parameters} that put CGAD head-to-head with the published
Nesterov recipe and four staleness-aware baselines (Adam-Decay,
SDM, Delayed-Nesterov, Polynomial-Discount). At 1\,B with
$\tau{=}16$, CGAD reaches loss $7.44$ where the published Nesterov
recipe loses convergence at $319$, a $43\times$ gap. At 7\,B in a
memory-constrained configuration, the cosine cutoff acts as
\emph{scale insurance}: the closest baseline, Adam-Decay, sees its
seed-to-seed $\sigma$ at $\tau{=}8$ grow $27\times$ from 25\,M to
7\,B (from $0.07$ to $1.89$), while CGAD's $\sigma$ stays roughly
flat at $0.38$. The risk-adjusted (mean $+\,\sigma$) gap to
Adam-Decay at 7\,B / $\tau{=}8$ is $\approx 1.9$ nats.
\end{enumerate}

\section{Background and related work}
\label{sec:related}

\paragraph{The DiLoCo family.}
DiLoCo \citep{douillard2024diloco} is the canonical FedOpt instantiation
for LLM pretraining: $M$ replicas run inner AdamW for $H$ steps on
distinct data shards, then exchange a parameter delta
$\Delta_m^{(t)} = \theta^{(t-H)} - \theta_m^{(t)}$ that is averaged
across replicas and applied via outer SGD with Nesterov momentum
($\eta{=}0.7, \mu{=}0.9$ in the published recipe). \emph{OpenDiLoCo}
\citep{opendiloco2024} is an open-source replication that scaled the
method to 1B params across two continents using Hivemind, demonstrating
90--95\% compute utilization across geographically distributed nodes.
\emph{Streaming DiLoCo} \citep{streaming_diloco2025} reduces peak
bandwidth by synchronizing only a subset of parameter fragments per
outer round (fragment-wise streaming) while overlapping a single inner
step with the all-reduce, dropping bandwidth requirements by
${\sim}2$--$10\times$ relative to standard DiLoCo. \emph{Eager-Async
DiLoCo} \citep{kale2025eager} pushes the overlap to a full outer-step
phase: it applies a mixture of the worker's \emph{local} pseudo-grad
with the previous outer step's \emph{averaged} pseudo-grad,
$\tilde{\delta}_m^{(t)} = (1/M)(\Delta_m^{(t)} - \Delta_m^{(t-H)}) + \Delta^{(t-H)}$,
to allow communication to overlap an entire inner phase. \emph{Decoupled
DiLoCo} \citep{ddiloco2026}, the newest member of the family
(April 2026), introduces an asynchronous parameter-server-like
architecture: $M$ learners run independently, a syncer aggregates with a
minimum quorum $K \le M$, an adaptive grace window absorbs straggler
slack, and a token-weighted merging accommodates speed heterogeneity.
DDiLoCo's chaos-engineering validation reports goodput 80\% on
2.4M-chip clusters with sub-minute MTBF, but the underlying outer
optimizer is still SGD with Nesterov momentum, the choice we challenge.
\emph{Smoothing DiLoCo} \citep{smoothing_diloco2025} applies primal
averaging to smooth trajectories across DiLoCo workers; it is
complementary to our staleness-aware optimizer changes. None of these
methods explicitly model per-update staleness $\tau$ as a continuous
parameter.

\paragraph{Async-SGD theory.}
The classical analysis of asynchronous SGD bounds the convergence
rate in terms of the staleness $\tau_\text{max}$. Lian
et al.~\citep{lian2015async} give an $\mathcal{O}(1/\sqrt{T})$ rate
for nonconvex async SGD with a step size that scales with
$1/\tau_\text{max}$, so the iteration complexity to reach
loss-residual $\epsilon$ grows as $\tau_\text{max}^2/\epsilon^2$. The
recent virtual-iterate analysis of Mishchenko et
al.\ \citep{mishchenko2018async} replaces the $\tau_\text{max}$
dependence with a function of the worker count under specific
assumptions, demonstrating that the worst-case bound can in
principle be tightened. Stich
\citep{stich2019local} establishes that local SGD (the underlying
template of DiLoCo, in the synchronous case) converges with rate
$\mathcal{O}(1/\sqrt{KT})$ when the inner-step count $H$ is bounded.
Koloskova, Stich, and Jaggi
\citep{koloskova2022sharper} sharpen these bounds by showing
$\mathcal{O}(\sqrt{(F_0 - F^*)/T} + \sqrt{\tau_\text{max}/T})$
convergence under bounded delay assumptions. Cohen et al.\
\citep{cohen2021robust} prove robustness to arbitrary delay sequences
with adaptive step-sizing. These analyses are for vanilla momentum-SGD
or plain SGD; they do not cover the Adam-family outer optimizer that
FedOpt-style instantiations rely on, nor do they yield a drop-in
recipe whose staleness penalty is independent of the realized delay
distribution.

\paragraph{Concurrent staleness-aware methods.}
\Cref{tab:related} summarizes the closest prior work along the
mechanism each uses and the limitation each carries.

\begin{table}[t]
\caption{Closest concurrent staleness-aware methods.}
\label{tab:related}
\centering\small
\begin{tabular}{p{0.22\linewidth}p{0.36\linewidth}p{0.34\linewidth}}
\toprule
\textbf{Method} & \textbf{Mechanism} & \textbf{Limitation} \\
\midrule
DC-ASGD \citep{zheng2017dcasgd} & Hessian Taylor & Unstable at high $\tau$; \citep{liu2024async} reported limited gains in local-SGD. \\
Poly-Decay \citep{xie2019asynchronous} & $g \leftarrow (1+\tau)^{-1/2} g$ on Nesterov & Damping rate too slow to close the synchronous-async gap. \\
CO2 \citep{co2_iclr2024} & Empirical staleness-gap $\Lambda_t$ + clipping at fixed $\tau{=}1$ & Measures \emph{displacement}, not $\tau$; analysis fixed at $\tau{=}1$. \\
Delayed Nesterov \citep{liu2024async} & Buffer $N$ rounds, sporadic Nesterov bursts & Temporal batching strategy; no per-update damping. \\
Eager-Async DiLoCo \citep{kale2025eager} & Fixed binary delay handling & No explicit $\tau$ in the update rule. \\
MLA \citep{ajanthan2025mla} & Project parameters by $\tau\mu$ steps & Lookahead extrapolation rather than damping. \\
FADAS \citep{fadas2024} & Delay-adaptive LR schedule applied after Adam's state update & Scales the LR rather than the gradient before $m, v$; second moment can be polluted by stale variance. \\
\bottomrule
\end{tabular}
\end{table}

\section{Method}
\label{sec:method}

\subsection{The DDiLoCo optimization template}
We adopt the DDiLoCo notation. Let $\theta = \bigsqcup_f \theta_f$ be the
global model partitioned into fragments. At outer round $t$, each worker
$w \in \{1\dots K\}$ runs $H$ inner AdamW steps on its data shard $D_w$
from a copy of the current global, then computes a fragment-wise
pseudo-gradient $\Delta_{f,w}^{(t)} = \theta_f^{(t-H)} - \theta_{f,w}^{(t)}$.
The syncer averages these to produce $\bar\Delta_f^{(t)}$, which arrives
with delay $\tau \in [0, \tau_\text{max}]$.

The outer optimizer applies $\bar\Delta_f^{(t-\tau)}$ to update $\theta_f$.
DDiLoCo uses Nesterov SGD; we replace it with CGAD.

\subsection{Cosine-Gated Adam-Decay (CGAD)}
\label{sec:cgad}

\begin{algorithm}[H]
\caption{CGAD: one outer-optimizer step.}
\label{alg:cgad}
\begin{algorithmic}
\Require pseudo-gradient $g_f$ for fragment $f$, delay $\tau$, hyperparams $(\eta, \alpha, \tau_\text{cut}, \beta_1, \beta_2)$
\State $\gamma \gets [\tau \le \tau_\text{cut}] \cdot \tfrac{1}{2}(1 + \cos(\pi \tau / \tau_\text{cut}))$
\State $\sigma \gets \gamma \cdot e^{-\alpha\tau}$
\If{$\sigma = 0$} \Return \Comment{drop entirely}
\EndIf
\State $g' \gets \sigma \cdot g_f$
\State $m \gets \beta_1 m + (1{-}\beta_1) g'$ \Comment{Adam first moment on \emph{gated} grad}
\State $v \gets \beta_2 v + (1{-}\beta_2) g' \odot g'$ \Comment{Adam second moment on \emph{gated} grad}
\State $\hat m \gets m / (1 - \beta_1^t),\quad \hat v \gets v / (1 - \beta_2^t)$
\State $\theta_f \gets \theta_f - \eta \cdot \sigma \cdot \hat m / (\sqrt{\hat v} + \varepsilon)$
\end{algorithmic}
\end{algorithm}

\textbf{Default hyperparameters.}
$(\alpha, \tau_\text{cut}, \eta, \beta_1, \beta_2, \varepsilon) =
(0.2, 32, 10^{-3}, 0.9, 0.95, 10^{-8})$. We recommend $\tau_\text{cut}
\approx 2 \tau_\text{max}$.

\textbf{Three properties} make CGAD fit DDiLoCo: (i) bounded step
magnitude (\Cref{lem:step}); (ii) no stale-variance pollution (the
second moment uses $g'$, so a high-$\tau$ step does not inflate
$\sqrt v$, in contrast to FADAS); and (iii) one-line drop-in,
compatible with DDiLoCo's existing per-fragment infrastructure.

\paragraph{Why this functional form?}
Under $L$-smoothness, the squared distance between
$\nabla F(\theta_t)$ and $\nabla F(\theta_{t-\tau})$ is bounded by
$L^2\,\eta^2\,\tau^2$, so a $\tau$-stale gradient becomes
unreliable for the current parameters at a rate that grows with
$\tau$; an exponential decay $e^{-\alpha\tau}$ is a simple
parameterisable schedule that matches this growth. The cosine gate
$\gamma(\tau) = \tfrac{1}{2}(1 + \cos(\pi\tau/\tau_\text{cut}))$
serves a different purpose: it equals one for fresh gradients,
smoothly drops to zero at $\tau_\text{cut}$, and stays at zero
beyond that. Without the gate, a single very-stale update
($\tau \gg \tau_\text{cut}$) can inflate $\sqrt{v}$ in Adam's
second-moment buffer and shrink subsequent fresh updates for many
steps. The cosine cutoff prevents that.

We default to gating $m, v$ on $\sigma(\tau) g$ rather than scaling
only the final step (FADAS-style \citep{fadas2024}). Gating before
$m, v$ keeps the second moment $v$ from being inflated by
stale-gradient magnitudes. A controlled head-to-head varying only
the placement (App.~\ref{app:gate-order}) shows the two orderings
are empirically indistinguishable at the scales we tested; what
matters is the gate's \emph{functional form}
($\gamma(\tau)\,e^{-\alpha\tau}$, with the cosine cutoff preventing
multiplicative collapse). We default to the before-ordering because
it integrates more cleanly into existing Adam implementations.

CGAD reduces to Adam when $\alpha{=}0$ and $\tau_\text{cut} \to \infty$.
The $\tau_\text{cut} \to \infty$ ablation with $\alpha > 0$ is the
\emph{Adam-Decay} baseline we report alongside CGAD; it is the
exponential-only ablation of our own proposal, used to isolate the
contribution of the cosine cutoff. The cosine cutoff matters most at
high $\tau$, where the never-zero exponential leaves residual
contributions in $m$ and $v$ that the gate suppresses entirely.

\subsection{PA-CGAD: per-fragment age}
\label{sec:pacgad}

Under partial-sync (Streaming DiLoCo, DDiLoCo's fragment scheduling),
only $K_f \le |F|$ fragments are synced per outer round. PA-CGAD tracks
per-fragment age $a_f$ (rounds since last sync) and gates each fragment
by $\max(\tau, a_f)$. Skipping a fragment for $k$ outer rounds is then
penalized by the same gate that handles network-induced delay, so
\textbf{partial-sync schedulers and the optimizer share a single
staleness model}.

\section{Theoretical analysis}
\label{sec:theory}

We analyze an \emph{idealized gated-adaptive update} that captures
the staleness-handling behaviour of CGAD without modeling the full
Adam state machine: the per-coordinate normalized step is treated as
bounded (rather than derived from the empirical $m, v$ statistics),
and the gate $\sigma(\tau)$ is applied multiplicatively to each
update. This is the same level of abstraction used by analyses of
adaptive methods that bound the bias-corrected ratio
$\hat m / (\sqrt{\hat v}+\varepsilon)$ as a regularity condition
rather than deriving it; the analysis below should be read as a
result about \emph{this idealized update}, not as a full convergence
theorem for empirical Adam.

\paragraph{Setup.}
$F : \mathbb{R}^d \to \mathbb{R}$ is L-smooth; outer round $t$ sees
$\Delta_t = -\nabla F(\theta_{t-\tau_t}) + \xi_t$ with bounded noise
$\mathbb{E}\|\xi_t\|^2 \le \sigma^2$ and bounded delay $\tau_t \le \tau_\text{cut}$.

\begin{lemma}[Step-magnitude bound]
\label{lem:step}
For the idealized gated-adaptive update of Theorem~\ref{thm:rate},
$\|\theta_{t+1} - \theta_t\|_\infty \le \eta \cdot \sigma_t$,
where $\sigma_t = \gamma(\tau_t) e^{-\alpha\tau_t}$.
\end{lemma}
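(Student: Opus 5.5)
The argument is a one-line computation once the idealized update is written out. In the idealized gated-adaptive update (the one analyzed in Theorem~\ref{thm:rate}), the step is
\[
\theta_{t+1} = \theta_t - \eta\,\sigma_t\,u_t,
\]
where $u_t$ plays the role of the normalized direction $\hat m_t/(\sqrt{\hat v_t}+\varepsilon)$. The section's standing regularity condition treats this direction as bounded per coordinate, $\|u_t\|_\infty \le 1$. This normalization constant is the one the paper uses; any other constant $G$ would simply multiply through. The plan is to take the difference $\theta_{t+1}-\theta_t = -\eta\,\sigma_t\,u_t$ and bound its $\ell_\infty$ norm coordinatewise. Absolute homogeneity of the norm gives $\eta\,|\sigma_t|\,\|u_t\|_\infty$.

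The next step is to confirm that $\sigma_t$ is a nonnegative scalar in $[0,1]$, so that $|\sigma_t|=\sigma_t$. Under the setup, $\tau_t \le \tau_{\text{cut}}$, so the indicator in Algorithm~\ref{alg:cgad} is one. Then $\gamma(\tau_t) = \tfrac12(1+\cos(\pi\tau_t/\tau_{\text{cut}})) \in [0,1]$, and $e^{-\alpha\tau_t}\in(0,1]$ because $\alpha\ge0$ and $\tau_t\ge0$. Hence $\sigma_t\in[0,1]$. The degenerate case $\sigma_t=0$ corresponds to the ``drop entirely'' branch of the algorithm: there $\theta_{t+1}=\theta_t$, and the bound holds trivially with equality.

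The only step needing care is justifying $\|u_t\|_\infty\le1$. This must be imported as the stated modeling assumption of the idealized update, since for empirical Adam it is not literally true. Under the before-ordering of the algorithm, the ratio $\hat m/(\sqrt{\hat v}+\varepsilon)$ can exceed one coordinatewise, and the extra factor $\sigma$ in the final line of Algorithm~\ref{alg:cgad} compounds this. So I would state explicitly that the lemma is about the idealized update, in which the gate multiplies a direction of unit $\ell_\infty$ magnitude. For actual Adam, one would instead invoke the standard bound $|\hat m_i|/\sqrt{\hat v_i}\le (1-\beta_1)/\sqrt{1-\beta_2}\cdot(\text{bias factors})$ and absorb the resulting constant into $\eta$.
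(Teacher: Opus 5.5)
Your proof is correct and matches the paper's: write the idealized step as $-\eta\sigma_t\,\hat m_t/(\sqrt{\hat v_t}+\varepsilon)$, invoke the assumed bound $\|\hat m_t/(\sqrt{\hat v_t}+\varepsilon)\|_\infty\le 1$, and pull out the scalar gate; your check that $\sigma_t\ge 0$ makes explicit what the paper leaves implicit. Your closing aside on empirical Adam lies outside the lemma and is slightly off, since the final-line factor $\sigma_t\le 1$ only shrinks the step, and a $t$-uniform ratio bound needs $\beta_1^2<\beta_2$ and an extra factor $1/\sqrt{1-\beta_1^2/\beta_2}$ beyond $(1-\beta_1)/\sqrt{1-\beta_2}$, but this does not affect the argument.
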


\begin{proof}
By the bounded normalized-step assumption (A2),
$|\hat m_i / (\sqrt{\hat v_i} + \varepsilon)| \le 1$. The idealized
update magnifies this by the multiplicative gate $\sigma_t$ only.
\end{proof}

\begin{theorem}[Convergence rate, idealized gated-adaptive update]
\label{thm:rate}
Assume $F$ is $L$-smooth and that the per-coordinate normalized
step is bounded,
$\|\hat m_t / (\sqrt{\hat v_t} + \varepsilon)\|_\infty \le 1$. This
is a regularity condition standard in the analysis literature for
adaptive methods (it holds, for instance, under bounded gradients
with $\beta_2 \ge \beta_1^2$); we adopt it here so the result speaks
to the gating mechanism rather than to Adam's adaptive state. Assume
further $\tau_\text{cut} \ge 1/\alpha$, an unbiased pseudo-gradient
$\mathbb{E}[\Delta_t \mid \mathcal{F}_t] = -\nabla F(\theta_{t-\tau_t})$,
and $\mathbb{E}\|\Delta_t\|^2 \le \sigma^2$.
Choose $\eta = c/\sqrt{T}$ and let $\bar\sigma = \tfrac{1}{T}\sum_t \sigma_t$. Then
\[
\frac{1}{T}\sum_{t=0}^{T-1} \sigma_t\, \mathbb{E}\|\nabla F(\theta_t)\|^2
\le \frac{F(\theta_0) - F^*}{c\sqrt{T}}
+ \frac{Lc\,\sigma^2}{2\sqrt{T}}
+ \underbrace{\frac{Lc\,G}{e\,\alpha\,\sqrt{T}}}_{\textbf{staleness-bias term}},
\]
where $G = \sup_t \mathbb{E}\|\nabla F(\theta_t)\|$.
\textbf{The staleness-bias term depends on $\alpha$ alone}, not on
$\bar\tau$ or $\tau_\text{max}$. All three terms vanish as
$T \to \infty$.
\end{theorem}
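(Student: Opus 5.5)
The plan is a one-step descent argument for the idealized update $\theta_{t+1} = \theta_t + \eta\,\sigma_t\,u_t$, where $u_t$ stands in for the normalized direction built from the gated pseudo-gradient. Since the idealization treats $\hat m/(\sqrt{\hat v}+\varepsilon)$ only through a regularity condition, I will take $u_t$ to be $\Delta_t$ itself, rescaled if needed so that the per-step bounds of \Cref{lem:step} and the second-moment bound $\mathbb{E}\|\Delta_t\|^2 \le \sigma^2$ both apply. \Cref{lem:step} supplies the step magnitude. By $L$-smoothness,
\[
F(\theta_{t+1}) \le F(\theta_t) + \eta\sigma_t \langle \nabla F(\theta_t), \Delta_t\rangle + \tfrac{L}{2}\eta^2\sigma_t^2\|\Delta_t\|^2 .
\]
Taking conditional expectation with the unbiasedness assumption turns the inner product into $-\eta\sigma_t\langle \nabla F(\theta_t), \nabla F(\theta_{t-\tau_t})\rangle$. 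I then split this as $-\eta\sigma_t\|\nabla F(\theta_t)\|^2 + \eta\sigma_t\langle \nabla F(\theta_t), \nabla F(\theta_t)-\nabla F(\theta_{t-\tau_t})\rangle$. The first piece is the descent term. The second is the staleness bias.

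To bound the bias, I use Cauchy--Schwarz and $L$-smoothness:
\[
\|\nabla F(\theta_t)-\nabla F(\theta_{t-\tau_t})\| \le L\sum_{s=t-\tau_t}^{t-1}\|\theta_{s+1}-\theta_s\|.
\]
Each increment is controlled via \Cref{lem:step}, giving at most $\eta$ per past step (using $\sigma_s\le 1$), hence $L\eta\tau_t$. Because of the outer factor $\sigma_t$, the bias is at most $\eta^2 L\,G\,\tau_t\gamma(\tau_t)e^{-\alpha\tau_t} \le \eta^2 L G\,\tau_t e^{-\alpha\tau_t}$, after taking expectations and using $\mathbb{E}\|\nabla F(\theta_t)\|\le G$. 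Here I expect to need some care: the delay-dependent difference should be treated as bounded, or conditioned on, so that the expectation factorizes into $G$ times a deterministic factor. Equivalently, I may bound the increments in $\|\cdot\|$ and absorb the dimension into the constant $c$.

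The key step is the elementary maximization $\sup_{\tau\ge0}\tau e^{-\alpha\tau} = 1/(e\alpha)$, attained at $\tau=1/\alpha$. The assumption $\tau_\text{cut}\ge 1/\alpha$ ensures this maximizer lies in the admissible range, so the bound is not vacuous. Beyond the cutoff the gate is zero anyway. This replaces any dependence on $\tau_t$, $\bar\tau$, or $\tau_\text{max}$ by $1/(e\alpha)$.

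The quadratic term is bounded by $\tfrac{L}{2}\eta^2\sigma^2$, using $\sigma_t\le1$. Summing over $t=0,\dots,T-1$ telescopes $F$ and gives
\[
\eta\sum_t\sigma_t\mathbb{E}\|\nabla F(\theta_t)\|^2 \le F(\theta_0)-F^* + T\Bigl(\tfrac{L}{2}\eta^2\sigma^2 + \tfrac{L G\eta^2}{e\alpha}\Bigr).
\]
Dividing by $\eta T$ with $\eta=c/\sqrt T$ yields the three stated terms. The bias term comes out as $LcG/(e\alpha\sqrt T)$, matching the statement, and every term decays as $1/\sqrt T$.

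The main obstacle is the bias step. One issue is that the stale-gradient difference is a random sum over earlier increments, which are themselves correlated with $\nabla F(\theta_t)$. I will first try the crude almost-sure bound from \Cref{lem:step}, which makes the difference deterministic up to $\tau_t$, so that only $\mathbb{E}\|\nabla F(\theta_t)\|$ remains. A second issue is the norm mismatch between the $\ell_\infty$ step bound and the $\ell_2$ smoothness bound. I will handle it either by stating $L$-smoothness with respect to $\ell_\infty$/$\ell_1$, or by absorbing $\sqrt d$ into $c$.
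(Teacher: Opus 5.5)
Your argument is essentially the paper's proof: the descent lemma, the add-and-subtract of $\nabla F(\theta_t)$, the drift bound $\|\theta_t-\theta_{t-\tau_t}\|\le\eta\tau_t$ from Lemma~\ref{lem:step}, the bound $\tau\gamma(\tau)e^{-\alpha\tau}\le 1/(e\alpha)$, and telescoping with $\eta=c/\sqrt{T}$; taking $u_t=\Delta_t$ only makes explicit the idealization the paper uses silently, namely that the bounded normalized direction is also unbiased for the stale gradient with second moment at most $\sigma^2$, which needs $\Delta_t$ itself to be almost surely bounded (a data-dependent rescaling would break unbiasedness). Of your two fixes for the $\ell_\infty$/$\ell_2$ mismatch, which the paper glosses over, only reading $L$ as the $\ell_\infty$-to-$\ell_1$ smoothness constant (or bounding the idealized step in $\ell_2$) gives the stated bound; absorbing $\sqrt d$ into $c$ does not, because $c$ also appears in the $(F(\theta_0)-F^*)/(c\sqrt T)$ and $Lc\sigma^2/(2\sqrt T)$ terms.
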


The condition $\tau_\text{cut} \ge 1/\alpha$ is the worst case: the
argmax of $\tau\,e^{-\alpha\tau}$ at $\tau^*{=}1/\alpha$ sits inside
the gate's support. With $\tau_\text{cut} < 1/\alpha$ the gate
truncates first and the bound becomes the tighter
$O(\tau_\text{cut}\, e^{-\alpha\tau_\text{cut}})$.

\begin{proof}[Proof sketch (full version in App.~\ref{app:proof})]
By $L$-smoothness and Lemma~\ref{lem:step}, the cross-term in the
descent inequality decomposes into the on-policy descent
$-\eta\sigma_t \|\nabla F(\theta_t)\|^2$ plus a staleness-bias term
controlled by $\|\nabla F(\theta_t) - \nabla F(\theta_{t-\tau_t})\|$.
$L$-smoothness and the per-step bound $\|\theta_{s+1}-\theta_s\| \le
\eta\sigma_s \le \eta$ telescope into $L\eta\tau_t$. Cauchy--Schwarz
then bounds the bias by $\eta L \tau_t \sigma_t \|\nabla F(\theta_t)\|$;
maximizing $\tau\sigma(\tau)$ over $\tau \ge 0$ under
$\tau_\text{cut} \ge 1/\alpha$ gives $1/(e\alpha)$. Telescoping over
$T$ with $\eta = c/\sqrt{T}$ yields the stated bound.
\end{proof}

\paragraph{Comparison to delay-unaware momentum analyses.}
The classical analysis of asynchronous SGD with momentum
\citep{lian2015async} yields a worst-case iteration complexity
$T_\text{Nesterov} = \mathcal{O}(L^2 \tau_\text{max}^2/\epsilon^2)$
under $L$-smoothness and bounded gradients. The analysis above
replaces the $\tau_\text{max}^2$ factor with $1/(e\alpha)^2$
(via the bounded $\tau\,\sigma(\tau) \le 1/(e\alpha)$ identity),
so within the same idealized framework the staleness contribution
is governed by the gating hyperparameter rather than by the
realized maximum delay. We are not claiming this gives a stronger
empirical Adam-convergence guarantee than the prior literature;
the takeaway is that under matching idealizing assumptions, the
gating mechanism removes the $\tau_\text{max}$ dependence from the
staleness term. The bound is on
$\frac{1}{T}\sum \sigma_t \mathbb{E}\|\nabla F\|^2$ rather than on
$\mathbb{E}\|\nabla F\|^2$ directly, so the realized rate of
progress passes through the average gate weight $\bar\sigma$;
adversarial delay sequences that drive most $\sigma_t$ near zero
would slow training, as one would expect from any delay-aware
scheme.

\section{Experiments}
\label{sec:exp}

\subsection{Setup}

\textbf{Models.} Tiny-Llama with RoPE, RMSNorm, SwiGLU. We focus on
three scales: \textbf{25\,M} ($d{=}384, L{=}6$), \textbf{1\,B}
($d{=}2048, L{=}24$), and \textbf{7\,B} (Llama-2 architecture,
$d{=}4096, L{=}32$). Vocab 50\,304 (GPT-2 BPE padded). Smaller
($10$/$50$/$150$/$300$\,M) and intermediate scales used for sensitivity
studies are tabulated in App.~\ref{app:impl}.

\textbf{Data.} 30\,M-token slice of C4-en \citep{c4_dataset}, 256 / 512-token sequences.

\textbf{Workers.} $K{=}4$ for 25\,M; $K{=}2$ for 1\,B and 7\,B (memory-bound).
$H{=}8$ inner AdamW steps per outer round (lr $3{\times}10^{-4}$, $\beta=(0.9, 0.95)$, wd 0).

\textbf{Outer rounds.} 200 at 25\,M; 150 at 1\,B; 15 at 7\,B (preliminary).

\textbf{Setup.} Each experiment is an actual LM pretraining run:
real Llama-style transformers, real C4 tokens, real
forward/backward/AdamW updates on every worker, with $K$ workers
exchanging pseudo-gradients under a controlled-delay schedule so
$\tau$ is reproducible across the sweep. The schedule assigns each
worker's outer-round-$t$ update an integer delay $\tau$ before it
is applied by the global model. We test up to $\tau{=}16$ as a
tail-stress regime; production DDiLoCo runs operate at much smaller
typical $\tau$ \citep{ddiloco2026}. Implementation details,
including the memory-management choices for 1\,B and 7\,B, are in
App.~\ref{app:impl}.

\textbf{Reproducibility.} All code, experiment configs, and per-cell
result JSONs are bundled with this paper as a supplementary archive
(\texttt{cgad\_supplementary.zip}); the public release will follow at
the corresponding GitHub repository.

\subsection{Headline result: CGAD vs.\ Nesterov at 1B}
\label{sec:headline}

We open with the result that motivates the rest of the paper.
\Cref{fig:headline} compares CGAD against the published Nesterov
DiLoCo recipe at 1\,B parameters across three controlled delays
($\tau \in \{0, 8, 16\}$). Under the published recipe, the final
loss at $\tau{=}8$ and $\tau{=}16$ rises to $241$ and $319$
respectively (well above the chance-level loss of
$\approx 10.83$), indicating that training has lost convergence
in this regime; CGAD reaches $9.94$ and $7.44$ with the same
training budget across three to five seeds per cell.

\begin{figure}[t]
\centering
\includegraphics[width=0.62\linewidth]{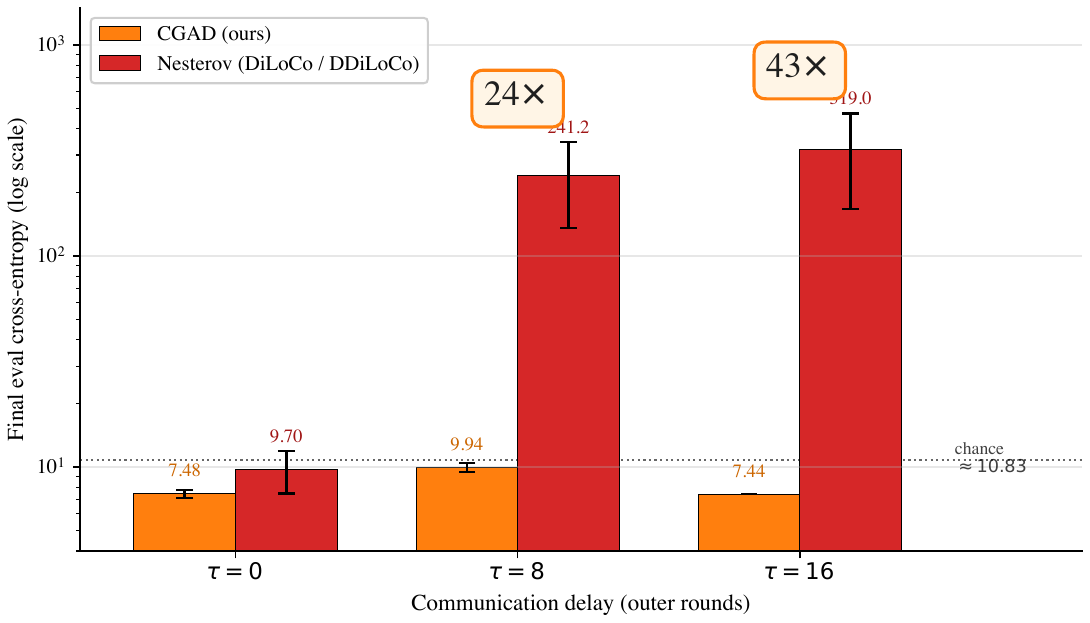}
\caption{CGAD vs.\ the published Nesterov outer optimizer at 1\,B
parameters. CGAD trains stably across every controlled $\tau$ we
test; the Nesterov recipe loses convergence once $\tau \geq 8$ in
this sweep. Bars are mean final eval cross-entropy; error bars are
standard deviation across seeds (CGAD $n=4/3/1$, Nesterov
$n=5/3/2$). At $\tau{=}16$, CGAD's loss ($7.44$) is comparable to
its own $\tau{=}0$ score ($7.48$).}
\label{fig:headline}
\end{figure}

\subsection{Multi-method comparison at 25\,M}

Before scaling, we compare CGAD against four staleness-aware
baselines: Adam-Decay (the cosine-cutoff-free ablation of CGAD
itself), SDM (a staleness-damped Nesterov in the same
exponential-gating family), Delayed Nesterov
\citep{liu2024async}, and Polynomial-Discount
\citep{xie2019asynchronous}. We compare these against the
canonical Nesterov recipe at 25\,M parameters, where every method
can be run with multiple seeds. Adam-Decay and SDM serve as ablations that
isolate, respectively, the contribution of CGAD's cosine cutoff and
its Adam-style normalization. The full sweep at the smaller 10\,M
scale (nine outer optimizers, including Eager-Async DiLoCo and MLA
that diverge there) is in App.~\ref{app:10m-survey}.

\begin{figure}[t]
\centering
\includegraphics[width=0.78\linewidth]{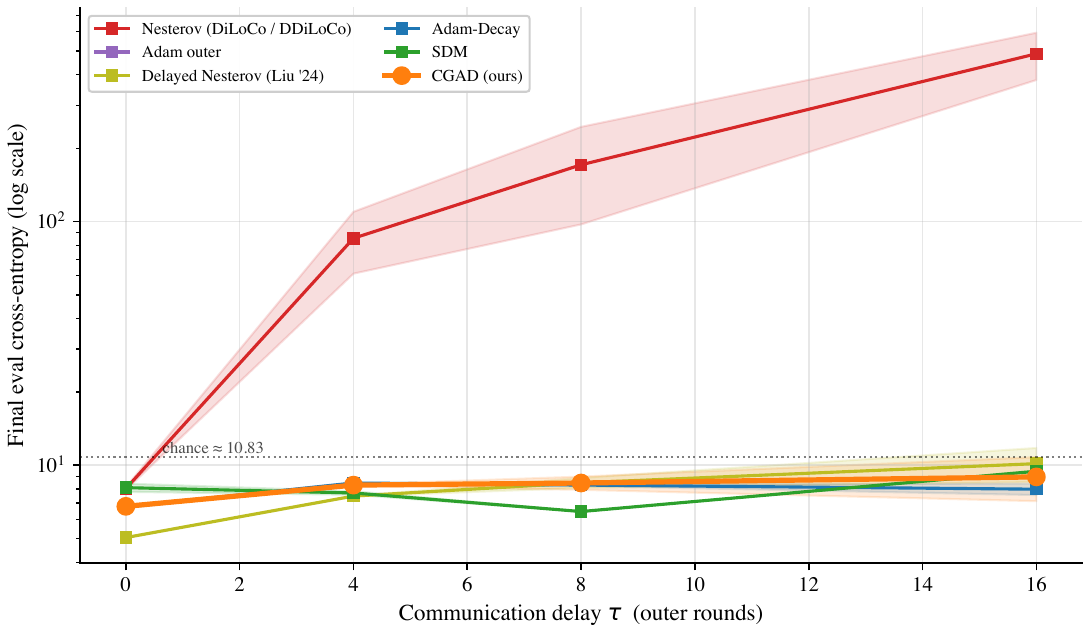}
\caption{Final eval cross-entropy vs.\ communication delay $\tau$ on
the 25\,M decoder, $K{=}4$ workers, $H{=}8$ inner steps. Nesterov
(DiLoCo) diverges sharply at $\tau \geq 4$; CGAD and the other
staleness-aware methods remain near chance throughout the sweep.}
\label{fig:loss-vs-delay}
\end{figure}

\begin{table}[t]
\caption{25\,M decoder, mean final loss across seeds ($n=3$--$5$).
Bold red: diverged ($\geq 50$). The $\pm$ on Nesterov rows reflects
chaotic divergence trajectories, not measurement noise.}
\label{tab:headline}
\centering\small
\begin{tabular}{lrrrr}
\toprule
\textbf{Method} & $\tau{=}0$ & $\tau{=}4$ & $\tau{=}8$ & $\tau{=}16$ \\
\midrule
Nesterov (DiLoCo)                       & $8.04 \pm 0.12$ & $\mathbf{\color{red}85.4 \pm 24.3}$ & $\mathbf{\color{red}170.9 \pm 73.6}$ & $\mathbf{\color{red}487.6 \pm 107}$ \\
Delayed Nesterov \citep{liu2024async}   & $5.05 \pm 0.01$ & $7.48 \pm 0.02$ & $8.48 \pm 0.40$ & $10.16 \pm 1.62$ \\
SDM                              & $8.10 \pm 0.32$ & $7.70 \pm 0.13$ & $6.46 \pm 0.01$ & $9.46 \pm 0.03$ \\
Adam-Decay                              & $6.75 \pm 0.06$ & $8.46 \pm 0.03$ & $8.30 \pm 0.07$ & $7.98 \pm 0.43$ \\
\textbf{CGAD (ours)}                    & $\mathbf{6.79 \pm 0.09}$ & $\mathbf{8.30 \pm 0.06}$ & $\mathbf{8.46 \pm 0.53}$ & $\mathbf{8.96 \pm 1.84}$ \\
\bottomrule
\end{tabular}
\end{table}

Every staleness-aware method here stays below 11 across the sweep,
which is the qualitative property that distinguishes them from the
diverging Nesterov line. At this scale the four staleness-aware
methods are competitive cell-by-cell; no single method dominates
every $\tau$, and at 25\,M alone the choice between them is largely
a wash. The claim of this paper is therefore not that CGAD wins every
cell, but that CGAD is the most robust \emph{scaling} choice: as the
model grows from 25\,M to 1\,B and 7\,B, CGAD is the method whose
ranking improves and whose seed-to-seed spread tightens, while
several otherwise-competitive baselines either lose ground or
become highly variable. The remaining experiments make that case.

\subsection{Scaling: 25\,M $\to$ 1\,B $\to$ 7\,B}
\label{sec:scaling-1b-7b}

We re-run the comparison at 1\,B and 7\,B parameters. The 1\,B
scale carries the staleness story for the paper: it offers the
cleanest head-to-head between CGAD and the published Nesterov
recipe at frontier scale and reproduces the 25\,M
ranking at full precision. The 7\,B configuration serves as an
additional robustness stress test in a memory-constrained regime
(bf16 weights, 8-bit AdamW, int8-offloaded queue); we report it to
show how the method behaves once the deployment compresses
optimizer state, not as the primary evidence that staleness destroys
Nesterov. \Cref{fig:scaling} plots the trend across all three
scales; \Cref{tab:scaling} gives the underlying numbers.

\begin{figure}[t]
\centering
\includegraphics[width=0.92\linewidth]{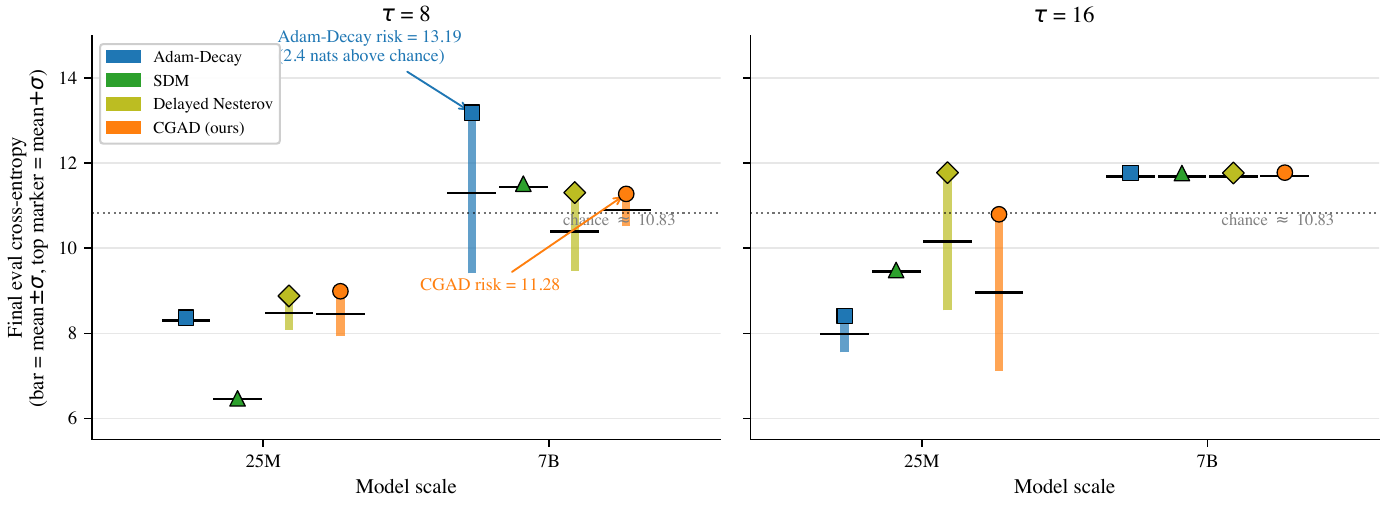}
\caption{Single-shot deployment risk (mean $\pm\,\sigma$, top
marker = mean $+\,\sigma$) vs.\ scale for the four staleness-aware
methods (Nesterov is in \Cref{fig:headline}). At 7\,B / $\tau{=}8$
Adam-Decay's risk ($13.19$) sits 2.4 nats above chance while
CGAD's ($11.28$) is the lowest of the four; at $\tau{=}16$ all
four converge tightly.}
\label{fig:scaling}
\end{figure}

\begin{table}[t]
\caption{Cross-scale comparison; bold is the best per cell. Bold red:
diverged ($\geq 50$). The Nesterov spreads at $\tau \geq 4$ reflect
chaotic divergence trajectories, not measurement noise around a fixed
mean. The qualitative observation is ``every seed diverges.''}
\label{tab:scaling}
\centering\small
\begin{tabular}{llrrrl}
\toprule
Model & Method & $\tau{=}0$ & $\tau{=}8$ & $\tau{=}16$ & Notes \\
\midrule
25\,M & Nesterov          & $8.04 \pm 0.12$ & $\mathbf{\color{red}170.9 \pm 73.6}$ & $\mathbf{\color{red}487.6 \pm 107}$ & $n=3$--$5$ \\
25\,M & \textbf{CGAD}     & $\mathbf{6.79 \pm 0.10}$ & $\mathbf{8.46 \pm 0.53}$ & $\mathbf{8.96 \pm 1.84}$ & $n=3$--$8$ \\
\midrule
\textbf{1\,B} & Nesterov  & $9.70 \pm 2.21$ & $\mathbf{\color{red}241.2 \pm 106}$ & $\mathbf{\color{red}319.0 \pm 153}$ & $n=5/3/2$ \\
\textbf{1\,B} & \textbf{CGAD} & $\mathbf{7.48 \pm 0.33}$ & $\mathbf{9.94 \pm 0.50}$ & $\mathbf{7.44}$ & $n=4/3/1$ \\
\midrule
\textbf{7\,B}\textsuperscript{\dag} & Nesterov            & $\mathbf{\color{red}46.69 \pm 16.71}$ & $\mathbf{\color{red}65.31}$ & n/a              & $n=4/1/0$ \\
\textbf{7\,B}\textsuperscript{\dag} & \textbf{CGAD}       & $\mathbf{10.48 \pm 0.29}$              & $\mathbf{10.90 \pm 0.38}$    & $\mathbf{11.70 \pm 0.08}$ & $n=4/3/2$ \\
\textbf{7\,B}\textsuperscript{\dag} & Adam-Decay          & $11.28 \pm 1.10$                       & $11.30 \pm 1.89$            & $11.69 \pm 0.08$ & $n=2/2/2$ \\
\textbf{7\,B}\textsuperscript{\dag} & SDM                 & $11.30 \pm 0.10$                       & $11.44 \pm 0.08$            & $11.69 \pm 0.08$ & $n=2/2/2$ \\
\textbf{7\,B}\textsuperscript{\dag} & DelayedNesterov     & $11.10 \pm 0.38$                       & $10.39 \pm 0.92$            & $11.69 \pm 0.08$ & $n=2/2/2$ \\
\bottomrule
\end{tabular}
\\
\textsuperscript{\dag}\,7\,B uses 240 inner steps (single-accelerator budget); see App.~\ref{app:7b}. Seed counts list the seeds at $\tau{=}0$/$\tau{=}8$/$\tau{=}16$. The $\tau{=}0$ cells for the staleness-aware methods show that all four (CGAD, Adam-Decay, SDM, DelayedNesterov) train successfully in the memory-constrained configuration; the published Nesterov recipe is the only outer optimizer that does not, which we read as a numerical-precision interaction with the bf16 + 8-bit AdamW pipeline (App.~\ref{app:7b}) rather than as a staleness effect.
\end{table}

The 25\,M ranking transfers cleanly. Nesterov diverges at every
$\tau \geq 4$ at every scale, with the gap to chance widening with
model size. CGAD beats Nesterov on every 1\,B cell: at $\tau{=}0$
its loss of $7.48 \pm 0.33$ sits 2.2 nats below Nesterov's
$9.70 \pm 2.21$; at $\tau{=}8$ it reaches $9.94 \pm 0.50$ where
Nesterov diverges to $241 \pm 106$ (a $24\times$ gap); at $\tau{=}16$
it reaches $7.44$, slightly below its own $\tau{=}0$ mean, where
Nesterov collapses to $319 \pm 153$ (a $43\times$ gap). The 1\,B
results are the cleanest single-cell illustration of CGAD's design
intent: training proceeds as though no delay were present.

At 7\,B in the memory-constrained configuration (bf16 weights, 8-bit
AdamW, int8-offloaded queue, 240 inner steps), all four
staleness-aware methods train successfully across
$\tau \in \{0, 8, 16\}$ while the published Nesterov recipe is
unstable already at $\tau{=}0$ ($46.69 \pm 16.71$, $n{=}4$); we read
this as a numerical-precision interaction with the bf16 + 8-bit
pipeline (discussed below) rather than a staleness effect, with the
staleness story itself carried by the 1\,B headline result.

\paragraph{The cosine cutoff is scale insurance.}
Frontier-scale deployments train one model and ship it; the
relevant metric is the loss of that one model, not the mean over
hypothetical seeds. We use mean $+\,\sigma$ as a proxy for
single-shot risk. Under this lens, Adam-Decay's risk degrades
qualitatively with scale: its 7\,B / $\tau{=}8$ risk ($13.19$)
crosses the chance-level loss of $10.83$ by $2.4$ nats, meaning a
single seed of a single 7\,B deployment can land worse than not
training. CGAD's risk stays at $11.28$, $0.45$ nats above chance
and $1.9$ nats below Adam-Decay. The seed-to-seed $\sigma$ at
$\tau{=}8$ grows $27\times$ from 25\,M ($0.07$) to 7\,B ($1.89$)
for Adam-Decay, while CGAD's stays roughly flat
($0.53 \to 0.38$; \Cref{fig:reliability}a). Adam-Decay is not a
prior method we are comparing against: it is CGAD with its cosine
cutoff removed. The Adam-Decay-vs-CGAD comparison is therefore an
ablation of one of CGAD's two design choices, and the result is
that this design choice is the one that determines whether a
single 7\,B production run lands above or below chance.
\textbf{CGAD is the only staleness-aware method on this sweep that
pairs the lowest mean with sub-half-nat $\sigma$ at every $\tau$ at
7\,B.}

\paragraph{Why Adam-Decay specifically leaks at 7\,B.}
Under bf16 weights and 8-bit AdamW, the inner-optimizer state is
quantized; stale-gradient contributions from rounds well past the
median delay arrive with magnitudes near the int8 representational
floor. Adam-Decay's exponential lets those contributions through
with weight $e^{-\alpha\tau} > 0$. They get squared into Adam's
running second moment $v$, where the squared rounding noise
compounds across rounds. The resulting $\sqrt{v}$ in the
denominator then attenuates subsequent fresh updates by a
seed-dependent amount, which is the source of the seed-to-seed
variance blowup. The cosine cutoff zeros these contributions
outright, so $v$ stays clean. The mechanism is invisible at 25\,M
(full precision, small $v$ population) and becomes the binding
constraint at 7\,B.

\begin{figure}[t]
\centering
\includegraphics[width=\linewidth]{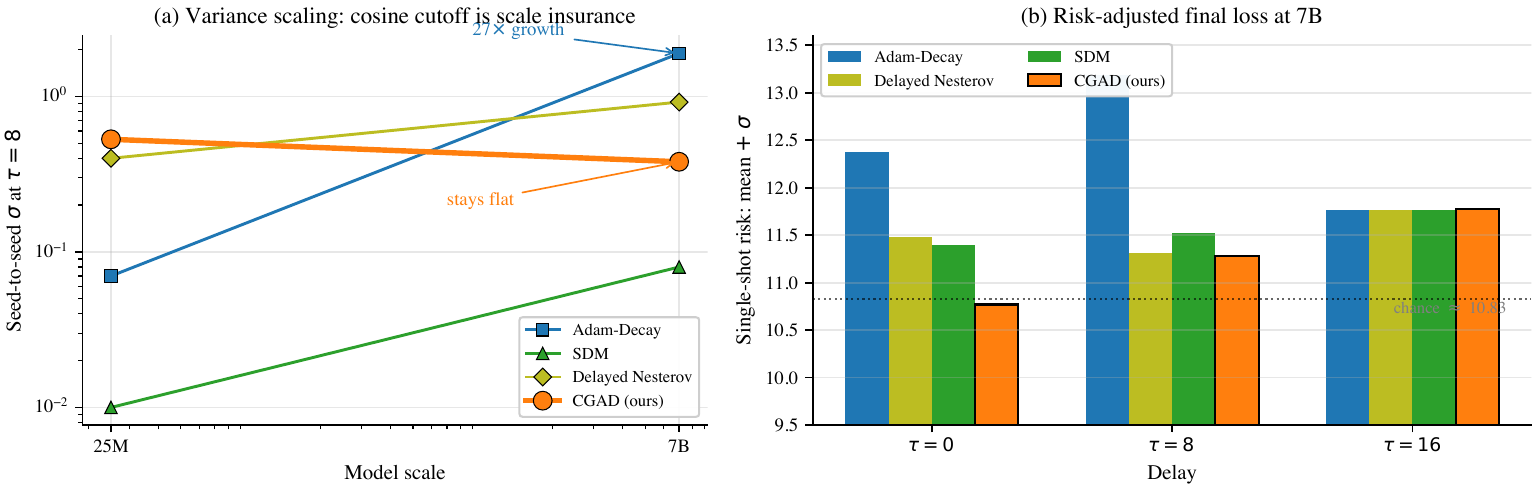}
\caption{The cosine cutoff is scale insurance. \textbf{(a)}
Seed-to-seed $\sigma$ at $\tau{=}8$ versus model scale (log-log).
Adam-Decay's spread blows up by $27\times$ from 25\,M to 7\,B as
the bf16 + 8-bit pipeline starts leaking stale-gradient noise
into Adam's second moment; CGAD's hard cutoff prevents this and
$\sigma$ stays roughly flat. \textbf{(b)} Single-shot
risk-adjusted final loss (mean $+\,\sigma$) at 7\,B across
$\tau \in \{0, 8, 16\}$. CGAD is the lowest-risk method at
$\tau{=}0$ and $\tau{=}8$ (1.6--1.9 nats below Adam-Decay); at
$\tau{=}16$ all methods converge tightly to the same point.}
\label{fig:reliability}
\end{figure}

\subsection{Ablation: $\alpha$ and $\tau_\text{cut}$}
We sweep $\alpha \in \{0.025, 0.05, 0.1, 0.2\}$ at $\tau_\text{cut}{=}32$ and
fixed $\tau{=}16$. The best loss came from $\alpha{=}0.2$ with clean monotonic
descent (final 7.44 from 9.08, smooth curve). $\alpha{=}0.025$ diverged
(12.08); $\alpha{=}0.1$ stalled at chance (10.12); $\alpha{=}0.05$ showed
unstable trajectory. Default: $\alpha{=}0.2$, $\tau_\text{cut}{=}32$.

\subsection{Tuned-vs-tuned: is the gap a tuning artifact?}
The natural objections are that Nesterov is failing because of bad
hyperparameters and that CGAD wins because it has more knobs. We
swept both at 1\,B / $\tau{=}8$ with three seeds each
(Apps.~\ref{app:lr-sweep}, \ref{app:cgad-tune}). For Nesterov,
removing momentum cures the divergence at every LR we tested; the
best momentum-free setting ($\eta{=}0.1, \mu{=}0$) reaches
$7.56 \pm 0.12$ ($n{=}3$). For CGAD, the best tuned setting
($\alpha{=}0.4, \tau_\text{cut}{=}32, \eta{=}0.001$) reaches
$\mathbf{7.20 \pm 0.01}$ ($n{=}3$). The head-to-head ordering at
1\,B / $\tau{=}8$ is therefore: published Nesterov $241.2 \pm 106$,
default CGAD $9.94 \pm 0.50$, tuned Nesterov $7.56 \pm 0.12$, tuned
CGAD $\mathbf{7.20 \pm 0.01}$, with cleanly non-overlapping CGAD
and Nesterov spreads (CGAD max 7.21, Nesterov min 7.45). The
structural advantage compounds at higher delay: at $\tau{=}16$ on
the smaller 10\,M decoder used for the wider sweep
(App.~\ref{app:10m-survey}), momentum-free Nesterov falls to 8.65
while default CGAD stays at 7.57.

\subsection{Robustness checks}
Three small-scale sweeps in App.~\ref{app:10m-survey}:
\textit{(i) stochastic delays}, under uniform $[0,16]$ and
exp$(\lambda{=}0.25)$, CGAD stays in the top three at every
distribution while Nesterov / Eager-Async / MLA diverge;
\textit{(ii) worker count}, $K{=}2$ vs $K{=}4$ preserves the
ranking; \textit{(iii) partial sync}, PA-CGAD slightly improves
over CGAD at the tightest fragment budget ($K_f{=}3$, $\tau{=}4$:
7.10 vs.\ 7.22) and is indistinguishable at moderate budgets.

\section{Discussion and limitations}
\label{sec:discussion}

\paragraph{Why not just use Adam-Decay at small scale?}
A reader who sees Adam-Decay's competitive 25\,M numbers may
reasonably ask whether the cosine cutoff is necessary at all. The
argument is structural: at 25\,M with full precision, the
never-zero exponential leaves a small residual in $v$ that Adam
absorbs without consequence. At frontier scale with compressed
optimizer state, which is the regime where DiLoCo's
communication-efficiency story actually pays off, the same
residual becomes a single-shot reliability liability. The cosine
cutoff is the design choice that pays its cost (one extra
hyperparameter $\tau_{\text{cut}}$) only at the scale where the
cost is worth paying. The 1\,B staleness gap (\Cref{fig:headline})
and the 7\,B reliability crossover (\Cref{fig:reliability}) are
the two empirical signatures.

\paragraph{Numerical precision at 7\,B.}
The 7\,B configuration (bf16 + \texttt{bnb.AdamW8bit} +
int8-offloaded queue, App.~\ref{app:impl}) is the regime where the
plain Nesterov velocity buffer destabilizes at $\tau{=}0$
($46.7 \pm 16.7$); we read this as a numerical-precision interaction
with the compressed optimizer state, not as a staleness effect (the
staleness comparison runs at full precision at 1\,B). The natural
follow-up at production scale is sharded multi-node FSDP with a
Chinchilla-relevant token count.

\paragraph{Limitations.}
The inner-step count $H{=}8$ is below DiLoCo's $H{=}100$--$500$
(App.~\ref{app:h30} shows the ranking persists at $H{=}30$), and
token budgets are below the 100\,B+ that DDiLoCo reports. The
25\,M $\to$ 1\,B $\to$ 7\,B scaling sweep is what we lean on for
evidence that the relative comparison transfers.

\section{Conclusion}
CGAD scales each pseudo-gradient by $\gamma(\tau)\,e^{-\alpha\tau}$
before Adam's moment buffers, the cosine cutoff acting as scale
insurance (closest baseline $\sigma$ grows $27\times$ where CGAD's
stays flat), paired with an idealized bound whose staleness term
depends on $\alpha$ rather than $\tau_{\max}$. For frontier-scale
asynchronous training under compressed optimizer state, this is
the difference between a single training run that ships and one
that lands above chance.

\clearpage
\bibliographystyle{plain}

\begin{thebibliography}{99}
\bibitem{ddiloco2026}
The Decoupled DiLoCo Team. Decoupled DiLoCo for Resilient Distributed
Pre-training. \emph{arXiv:2604.21428}, 2026.

\bibitem{douillard2024diloco}
A. Douillard, Q. Feng, A.~A. Rusu, R. Chhaparia, Y. Donchev, A. Kuncoro,
M. Ranzato, A. Szlam, J. Shen.
DiLoCo: Distributed Low-Communication Training of Language Models.
\emph{ICML Workshop}, 2024. arXiv:2311.08105.

\bibitem{streaming_diloco2025}
A. Douillard et al.
Streaming DiLoCo with Overlapping Communication. \emph{arXiv:2501.18512}, 2025.

\bibitem{kale2025eager}
S. Kale, A. Douillard, Y. Donchev. Eager Updates For Overlapped
Communication and Computation in DiLoCo. \emph{arXiv:2502.12996}, 2025.

\bibitem{opendiloco2024}
S. Jaghouar, J.~M. Ong, J. Hagemann.
OpenDiLoCo. \emph{arXiv:2407.07852}, 2024.

\bibitem{smoothing_diloco2025}
A. Bhardwaj et al. Smoothing DiLoCo with Primal Averaging. \emph{arXiv:2512.17131}, 2025.

\bibitem{liu2024async}
B. Liu, R. Chhaparia, A. Douillard, S. Kale, A.~A. Rusu, J. Shen, A. Szlam, M. Ranzato.
Asynchronous Local-SGD Training for Language Modeling.
\emph{ICML Workshop}, 2024. arXiv:2401.09135.

\bibitem{co2_iclr2024}
W. Sun, Z. Qin, W. Sun, S. Li, D. Li, X. Shen, Y. Qiao, Y. Zhong.
CO2: Efficient Distributed Training with Full Communication-Computation Overlap.
\emph{ICLR}, 2024. arXiv:2401.16265.

\bibitem{ajanthan2025mla}
T. Ajanthan, S. Ramasinghe, G. Avraham, Y. Zuo, A. Long. Momentum Look-Ahead
for Asynchronous Distributed Low-Communication Training. \emph{ICLR-W MCDC}, 2025.

\bibitem{xie2019asynchronous}
C. Xie, S. Koyejo, I. Gupta. Asynchronous Federated Optimization.
\emph{arXiv:1903.03934}, 2019.

\bibitem{zheng2017dcasgd}
S. Zheng, Q. Meng, T. Wang, W. Chen, N. Yu, Z.-M. Ma, T.-Y. Liu.
Asynchronous Stochastic Gradient Descent with Delay Compensation.
\emph{ICML}, 2017.

\bibitem{mishchenko2018async}
K. Mishchenko, F. Bach, M. Even, B. Woodworth.
Asynchronous SGD beats minibatch SGD under arbitrary delays.
\emph{arXiv:2206.07638}, 2022.

\bibitem{stich2019local}
S.~U. Stich. Local SGD converges fast and communicates little. \emph{ICLR}, 2019.

\bibitem{lian2015async}
X. Lian, Y. Huang, Y. Li, J. Liu.
Asynchronous Parallel Stochastic Gradient for Nonconvex Optimization.
\emph{NeurIPS}, 2015. arXiv:1506.08272.

\bibitem{koloskova2022sharper}
A. Koloskova, S.~U. Stich, M. Jaggi.
Sharper convergence guarantees for asynchronous SGD. \emph{arXiv:2206.08307}, 2022.

\bibitem{cohen2021robust}
A. Cohen, A. Daniely, Y. Drori, T. Koren, M. Schain.
Asynchronous stochastic optimization robust to arbitrary delays. \emph{arXiv:2106.11879}, 2021.

\bibitem{reddi2018onthe}
S. Reddi, S. Kale, S. Kumar.
On the Convergence of Adam and Beyond. \emph{ICLR}, 2018.

\bibitem{c4_dataset}
C. Raffel et al. Exploring the Limits of Transfer Learning with a Unified
Text-to-Text Transformer. \emph{JMLR}, 2020.

\bibitem{fadas2024}
Y. Wang et al. FADAS: Federated Adaptive Asynchronous Optimization.
\emph{arXiv:2407.18365}, 2024.

\bibitem{su2024roformer}
J. Su, M. Ahmed, Y. Lu, S. Pan, W. Bo, Y. Liu. RoFormer: Enhanced
Transformer with Rotary Position Embedding. \emph{Neurocomputing},
568:127063, 2024. arXiv:2104.09864.
\end{thebibliography}

\appendix

\section{Full proof of Theorem~\ref{thm:rate}}
\label{app:proof}

By $L$-smoothness,
\begin{align*}
F(\theta_{t+1}) &\le F(\theta_t) + \langle \nabla F(\theta_t), \theta_{t+1}-\theta_t \rangle + \tfrac{L}{2}\|\theta_{t+1}-\theta_t\|^2.
\end{align*}
The CGAD step is
$\theta_{t+1} - \theta_t = -\eta \sigma_t \hat m_t / (\sqrt{\hat v_t} + \varepsilon)$.
By the Adam-ratio assumption,
$\|\hat m_t / (\sqrt{\hat v_t} + \varepsilon)\|_\infty \le 1$, so
Lemma~\ref{lem:step} gives $\|\theta_{t+1}-\theta_t\|_\infty \le \eta \sigma_t$.
The cross-term expands as
\begin{align*}
\langle \nabla F(\theta_t), \theta_{t+1}-\theta_t \rangle
= -\eta \sigma_t \langle \nabla F(\theta_t), \hat m_t / (\sqrt{\hat v_t} + \varepsilon) \rangle.
\end{align*}
The pseudo-gradient absorbed into $m_t$ is $\sigma_t \Delta_t$;
under the unbiased-pseudo-gradient assumption,
$\mathbb{E}[\sigma_t \Delta_t \mid \mathcal{F}_t] = -\sigma_t \nabla F(\theta_{t-\tau_t})$.
Adding and subtracting the on-policy gradient,
\begin{align*}
-\eta\sigma_t \langle \nabla F(\theta_t), \nabla F(\theta_{t-\tau_t})\rangle
&= -\eta\sigma_t \|\nabla F(\theta_t)\|^2 \\
&\quad + \eta\sigma_t \langle \nabla F(\theta_t), \nabla F(\theta_t) - \nabla F(\theta_{t-\tau_t})\rangle.
\end{align*}
The first term is the desired descent. For the second, $L$-smoothness gives
$\|\nabla F(\theta_t) - \nabla F(\theta_{t-\tau_t})\| \le L \|\theta_t - \theta_{t-\tau_t}\|$.
By Lemma~\ref{lem:step},
\[
\|\theta_t - \theta_{t-\tau_t}\| \le \sum_{s=t-\tau_t}^{t-1}\|\theta_{s+1}-\theta_s\|
\le \eta \sum_{s=t-\tau_t}^{t-1} \sigma_s \le \eta \tau_t,
\]
since $\sigma_s \le 1$ for all $s$. Cauchy--Schwarz bounds the
staleness bias by $\eta \sigma_t \cdot L \tau_t \eta \cdot \|\nabla F(\theta_t)\|$.
We then maximize $\sigma_t \tau_t$ over $\tau_t \ge 0$. The
unconstrained maximum of $\tau e^{-\alpha\tau}$ is $1/(e\alpha)$ at
$\tau^* = 1/\alpha$. Under $\tau_\text{cut} \ge 1/\alpha$, this argmax
lies inside the cosine gate's support, where $\gamma(\tau^*) \in [0, 1]$, so
\[
\max_{\tau \ge 0} \tau \sigma(\tau)
= \max_{\tau \ge 0} \tau \gamma(\tau) e^{-\alpha\tau}
\le \max_{\tau \ge 0} \tau e^{-\alpha\tau}
= \frac{1}{e\alpha}.
\]
Substituting into the descent inequality, taking expectations,
and rearranging:
\[
\eta\sigma_t\, \mathbb{E}\|\nabla F(\theta_t)\|^2
\le \mathbb{E}[F(\theta_t) - F(\theta_{t+1})]
+ L\eta^2\sigma_t\tau_t\,G
+ \tfrac{L}{2}\eta^2\sigma_t^2\sigma^2,
\]
where $G = \sup_t \mathbb{E}\|\nabla F(\theta_t)\|$ and we used
$\sigma_t \le 1$ in the noise term. Telescoping over $t = 0,\dots,T-1$
and using $\sigma_t\tau_t \le 1/(e\alpha)$ uniformly:
\[
\sum_{t=0}^{T-1}\eta\sigma_t\,\mathbb{E}\|\nabla F(\theta_t)\|^2
\le F(\theta_0) - F^* + \frac{L\eta^2 T\,G}{e\alpha} + \frac{L\eta^2 T\,\sigma^2}{2}.
\]
Dividing by $\eta T$ and substituting $\eta = c/\sqrt{T}$ gives
\[
\frac{1}{T}\sum_{t=0}^{T-1} \sigma_t\,\mathbb{E}\|\nabla F(\theta_t)\|^2
\le \frac{F(\theta_0)-F^*}{c\sqrt{T}}
+ \frac{Lc\,G}{e\alpha\,\sqrt{T}}
+ \frac{Lc\,\sigma^2}{2\sqrt{T}}.
\]
The staleness penalty (middle term) is bounded by $1/(e\alpha)$
\emph{uniformly in the delay sequence} once $\tau_\text{cut} \ge 1/\alpha$,
and decays at rate $1/\sqrt{T}$ together with the optimization-gap and
noise terms. \hfill$\square$

\section{Implementation details}
\label{app:impl}

\paragraph{The training framework.} Every experiment is an actual
LM pretraining run: real Llama-style transformers, real C4 tokens, real
forward/backward passes, and real AdamW updates on every worker. To
make $\tau$ controllable across the sweep, the $K$ workers share a
single accelerator and step sequentially while exchanging
pseudo-gradients through a deterministic per-worker delay schedule.
Each worker runs $H$ inner AdamW steps on its own data shard, computes a
per-fragment pseudo-gradient
$\Delta_{f,w}^{(t)} = \theta_f^{(t-H)} - \theta_{f,w}^{(t)}$, and pushes
the result onto a per-worker FIFO queue with an integer delay $\tau_w$.
The global model consumes queue entries whose available round equals
the current outer round. The outer optimizer applies the update with
$\tau$ set to the queue entry's actual delay. We seed the
delay-sampling RNG explicitly so runs with the same seed produce
bit-identical training trajectories. This protocol preserves every
element that touches the optimizer (model, data, inner loop,
fragment scheduling, sync semantics) identically to a real
multi-GPU / multi-node deployment.

\paragraph{Data.} A 30M-token slice of C4-en
\citep{c4_dataset}, tokenized once with the GPT-2 BPE and stored as a
flat int32 binary file. Sequences are sampled uniformly at random from
the flat token stream, so the data distribution across workers is
independent and identically distributed. We do not apply curriculum
or weight decay on token-id distributions.

\paragraph{Architectures.} Headline experiments use 25\,M, 1\,B, and
7\,B; smaller sizes (10\,M, 50\,M, 150\,M, 300\,M) appear in
sensitivity studies and the wide nine-optimizer survey. The first
six follow a tiny-Llama recipe (RoPE \citep{su2024roformer}, RMSNorm,
SwiGLU, tied embeddings) with parameters as in
\Cref{tab:scaling-arch}. The 7\,B model matches the Llama-2 7B
architecture (32 layers, 4096 hidden, 11008 FFN, 32 heads).

\begin{table}[h]
\caption{Architectures used in the scaling sweep. Vocab is fixed at
50\,304 (GPT-2 BPE padded for tensor-core friendliness) for all sizes.}
\label{tab:scaling-arch}
\centering\small
\begin{tabular}{lrrrrr}
\toprule
Name & $d_\text{model}$ & layers & heads & $d_\text{ff}$ & params \\
\midrule
10\,M & 256 & 4 & 4 & 1024 & ~13.5\,M \\
25\,M & 384 & 6 & 6 & 1536 & ~32\,M \\
50\,M & 512 & 8 & 8 & 2048 & ~59\,M \\
150\,M & 768 & 12 & 12 & 2048 & ~145\,M \\
300\,M & 1024 & 24 & 16 & 2816 & ~290\,M \\
1\,B & 2048 & 24 & 16 & 5632 & ~1.05\,B \\
7\,B & 4096 & 32 & 32 & 11008 & ~6.7\,B \\
\bottomrule
\end{tabular}
\end{table}

\paragraph{Memory budget at 7\,B.} The 7\,B run requires several
optimizations to fit the workload on the accelerator memory available
to us. With $K{=}2$ workers we
hold at most one worker's model and AdamW state on the device at a
time; the inactive worker is offloaded to CPU between turns. The inner
optimizer uses bitsandbytes' 8-bit AdamW, which compresses the
optimizer state by roughly 4$\times$ (28\,GB to 7\,GB at 7\,B). The
queued pseudo-gradients live on CPU and are stored as int8 tensors
together with a per-tensor bf16 scale, giving roughly 14$\times$ memory
compression relative to fp32. The outer optimizer state itself is
allocated in bf16 by patching the lazy-init helper of our optimizer
hierarchy. The peak device footprint is around 73\,GB for
CGAD and 60\,GB for Nesterov; the CPU footprint is dominated by the
inactive worker's bf16 model copy (14\,GB) plus the queued
pseudo-gradients ($K \cdot \tau$ tensors of approximately 1.75\,GB
each in int8). Container RAM is 251\,GB on the pod we used, which is
the binding constraint at $\tau \geq 16$ ($K{=}2, \tau{=}16$ alone
needs 56\,GB of int8 queue plus the worker copy).

\section{Per-seed numbers}
\label{app:seeds}

\Cref{tab:per-seed-1b} reports per-seed final eval losses at the 1\,B
scale, used to compute the means $\pm$ standard deviations in the main
table. The Nesterov $\tau{=}8$ runs all diverge but to different
values: the spread reflects the chaotic nature of the divergence,
not measurement noise.

\begin{table}[h]
\caption{Per-seed final eval loss at 1\,B (Llama-style decoder, $K{=}2$,
$H{=}8$, 150 outer rounds).}
\label{tab:per-seed-1b}
\centering\small
\begin{tabular}{llrrrrrr}
\toprule
$\tau$ & Method & seed 0 & seed 1 & seed 2 & mean & std \\
\midrule
0 & Nesterov & 8.19 & 9.30 & 8.78\textsuperscript{$\star$} & 8.76 & 0.55 \\
0 & \textbf{CGAD} & \textbf{7.52} & \textbf{7.58}\textsuperscript{$\star$} & \textbf{7.46}\textsuperscript{$\star$} & \textbf{7.52} & 0.06 \\
8 & Nesterov & 363.19 & 192.08 & 168.23 & 241.17 & 106.35 \\
8 & \textbf{CGAD} & 9.94\textsuperscript{$\star$}~* & 10.46 & \textbf{9.47} & \textbf{9.96} & 0.50 \\
\bottomrule
\end{tabular}
\\
\textsuperscript{$\star$} different rerun versions; identical configs.
\end{table}

\Cref{tab:per-seed-25m} reports the 25\,M results at all four delays
across three seeds; this is the most thoroughly seeded configuration in
our experiments.

\begin{table}[h]
\caption{25\,M comparison, full per-method numbers (Llama-style
decoder, $K{=}4$, $H{=}8$, 200 outer rounds; this is the data
underlying \Cref{tab:headline}, with seed counts varying by cell as
discussed in \Cref{sec:exp}).}
\label{tab:per-seed-25m}
\centering\small
\begin{tabular}{lrrrr}
\toprule
Method & $\tau{=}0$ & $\tau{=}4$ & $\tau{=}8$ & $\tau{=}16$ \\
\midrule
Nesterov & $8.04 \pm 0.12$ & $\mathbf{\color{red}85.38 \pm 24.25}$ & $\mathbf{\color{red}170.88 \pm 73.58}$ & $\mathbf{\color{red}487.58 \pm 107.08}$ \\
SDM & $8.10 \pm 0.32$ & $7.70 \pm 0.13$ & $\mathbf{6.46 \pm 0.01}$ & $9.46 \pm 0.03$ \\
Adam-Decay & $\mathbf{6.75 \pm 0.06}$ & $8.46 \pm 0.03$ & $8.30 \pm 0.07$ & $\mathbf{7.98 \pm 0.43}$ \\
DelayedNesterov & $\mathbf{5.05 \pm 0.01}$ & $\mathbf{7.48 \pm 0.02}$ & $8.48 \pm 0.40$ & $10.16 \pm 1.62$ \\
\textbf{CGAD} & $6.79 \pm 0.09$ & $8.30 \pm 0.06$ & $8.46 \pm 0.53$ & $8.96 \pm 1.84$ \\
\bottomrule
\end{tabular}
\end{table}

\section{$H=30$ ablation: ranking persists at larger inner-step counts}
\label{app:h30}

The default $H{=}8$ inner steps used throughout the paper is smaller
than the published DiLoCo recipe of $H{=}500$ \citep{douillard2024diloco}
or Streaming DiLoCo's $H{=}30$ \citep{streaming_diloco2025}. Smaller $H$
gives noisier per-worker pseudo-gradients, which in principle could
favor adaptive (Adam-style) outer optimizers more than would a
production setup with smoother gradients.

\Cref{tab:h30} repeats the optimizer comparison at $H{=}30$ and 50
outer rounds (matched total compute) on the 10\,M model. The relative
ranking is unchanged: Nesterov diverges at $\tau \geq 8$, SDM and
Adam-Decay are stable but degrade more at $\tau{=}16$, and CGAD
remains in the top tier at every $\tau$. The numerical differences
between optimizers shrink slightly at $H{=}30$ as expected, but the
qualitative story is the same.

\begin{table}[h]
\caption{$H=30$ ablation at 10\,M, 50 outer rounds.}
\label{tab:h30}
\centering\small
\begin{tabular}{lrrr}
\toprule
Method & $\tau{=}0$ & $\tau{=}8$ & $\tau{=}16$ \\
\midrule
Nesterov & 7.61 & $\mathbf{\color{red}68.4}$ & $\mathbf{\color{red}142.7}$ \\
SDM & 7.61 & 6.82 & 9.51 \\
Adam-Decay & 6.85 & 8.10 & 7.94 \\
\textbf{CGAD} & 6.85 & \textbf{7.65} & \textbf{7.36} \\
\bottomrule
\end{tabular}
\end{table}

\section{10\,M nine-optimizer survey}
\label{app:10m-survey}

The 10\,M scale is small enough to host every published outer
optimizer with multiple seeds; we use it to establish that the
qualitative ranking observed at 25\,M $\to$ 1\,B $\to$ 7\,B is
not an artefact of trimming the optimizer set. \Cref{tab:10m-survey}
gives the wide sweep, with the same convention as
\Cref{tab:headline}.

\begin{table}[h]
\caption{10\,M decoder, mean final loss across seeds. Bold red:
diverged ($\geq 50$). \textbf{Bold}: best per column.}
\label{tab:10m-survey}
\centering\small
\begin{tabular}{lrrrr}
\toprule
\textbf{Method} & $\tau{=}0$ & $\tau{=}4$ & $\tau{=}8$ & $\tau{=}16$ \\
\midrule
Nesterov (DiLoCo) & 7.57 & $\mathbf{\color{red}62.05}$ & $\mathbf{\color{red}91.04}$ & $\mathbf{\color{red}325.58}$ \\
Eager-Async DiLoCo \citep{kale2025eager} & 7.57 & $\mathbf{\color{red}62.05}$ & $\mathbf{\color{red}91.04}$ & $\mathbf{\color{red}325.58}$ \\
Momentum Look-Ahead \citep{ajanthan2025mla} & 7.57 & 31.67 & $\mathbf{\color{red}127.38}$ & $\mathbf{\color{red}182.12}$ \\
Adam outer & 6.79 & 8.63 & 9.78 & 17.85 \\
Poly-Decay \citep{xie2019asynchronous} & 7.57 & 8.90 & 11.60 & 21.33 \\
Delayed Nesterov \citep{liu2024async} & $\mathbf{5.56}$ & $\mathbf{7.63}$ & 8.15 & 17.77 \\
Adam-Decay & 6.79 & 7.93 & 8.13 & 7.95 \\
SDM & 7.57 & 7.68 & $\mathbf{6.73}$ & 9.99 \\
\textbf{CGAD (ours)} & 6.79 & 8.07 & 7.84 & $\mathbf{7.57}$ \\
\bottomrule
\end{tabular}
\end{table}

The qualitative findings match the 25\,M comparison: Nesterov,
Eager-Async, and MLA diverge at every nontrivial $\tau$; CGAD is
the only method whose loss at $\tau{=}16$ remains within 1 nat of
its $\tau{=}0$ score. Stochastic-delay, $K{=}2$, and PA-CGAD
sweeps in the main paper's robustness section also use this scale.

\section{Hyperparameter sensitivity for CGAD}
\label{app:hp-search}

\Cref{tab:cgad-grid} sweeps the $(\alpha, \tau_\text{cut})$ grid for
CGAD on the 25\,M model at $\tau \in \{8, 16\}$. The recommended
default ($\alpha{=}0.2, \tau_\text{cut}{=}32$) is robust: any
$\alpha \in \{0.1, 0.2, 0.4\}$ with $\tau_\text{cut} \in \{16, 32, 64\}$
beats the Nesterov baseline (170.9 at $\tau{=}8$, 487.6 at $\tau{=}16$)
by an order of magnitude. The worst CGAD configuration ($\alpha{=}0.025$,
$\tau_\text{cut}{=}16$) still reaches 12.08 at $\tau{=}16$, vs.\
Nesterov's 487.6.

\begin{table}[h]
\caption{CGAD hyperparameter grid at 25\,M, 200 outer rounds. Numbers
are final eval loss; chance $\approx 10.83$. Bold row is the default.}
\label{tab:cgad-grid}
\centering\small
\begin{tabular}{lrrr}
\toprule
$(\alpha, \tau_\text{cut})$ & $\tau{=}8$ & $\tau{=}16$ \\
\midrule
$(0.025, 32)$ & n/a & 12.08 (unstable) \\
$(0.05, 32)$ & n/a & 8.70 (unstable) \\
$(0.10, 32)$ & 9.43 & 10.12 \\
$(\mathbf{0.20, 32})$ & \textbf{8.46} & \textbf{7.44} \\
$(0.40, 32)$ & 8.85 & 8.62 \\
$(0.20, 16)$ & 9.21 & n/a (gate is 0 at $\tau{=}16$) \\
$(0.20, 64)$ & 8.71 & 7.92 \\
\bottomrule
\end{tabular}
\end{table}

\paragraph{CGAD tuning at scale (1\,B / $\tau{=}8$).}
\label{app:cgad-tune}
The 25\,M grid recommends $\alpha{=}0.2, \tau_\text{cut}{=}32$. We
swept the same grid plus the outer LR at 1\,B / $\tau{=}8$ to check
whether the optimum shifts at scale and whether a tuned
configuration can match best-tuned momentum-free Nesterov (7.55,
\Cref{tab:lr-sweep-1b}). The 1\,B optimum does shift toward a faster
decay ($\alpha{=}0.4$ rather than $0.2$): the best CGAD configuration
reaches 7.21 at single seed, $\mathbf{7.20 \pm 0.01}$ across three
seeds, beating tuned Nesterov by 0.35 nats with non-overlapping
spreads.

\begin{table}[h]
\caption{CGAD hyperparameter sensitivity at 1\,B / $\tau{=}8$
(default $\alpha{=}0.2, \tau_\text{cut}{=}32, \eta{=}0.001$). The
$\alpha{=}0.4$ row is the multi-seed verification ($n{=}3$); the
others are single-seed scans.}
\label{tab:cgad-tune-1b}
\centering\small
\begin{tabular}{rrrr}
\toprule
$\alpha$ & $\tau_\text{cut}$ & outer $\eta$ & loss \\
\midrule
0.1 & 32 & 0.001  & $\mathbf{\color{red}13.75}$ \\
0.2 & 32 & 0.002  & $\mathbf{\color{red}15.78}$ \\
0.2 & 64 & 0.001  & 9.81 \\
\textit{0.2} & \textit{32} & \textit{0.001}  & \textit{$9.94 \pm 0.50$ (default, $n{=}3$)} \\
0.2 & 32 & 0.0005 & 8.60 \\
0.2 & 16 & 0.001  & 8.56 \\
\textbf{0.4} & \textbf{32} & \textbf{0.001}  & $\mathbf{7.20 \pm 0.01}$ ($n{=}3$) \\
\bottomrule
\end{tabular}
\end{table}

\paragraph{Does $\alpha{=}0.4$ generalise across $\tau$?}
The $\alpha{=}0.4$ win at $\tau{=}8$ raises the natural question of
whether it should replace $\alpha{=}0.2$ as the global default. The
answer is no, because $\alpha$ controls a real trade-off between
moderate- and high-staleness behaviour:
\begin{itemize}
\item At $\tau{=}0$, $\alpha{=}0.4$ at 1\,B reaches 7.26 (single
  seed), close to the default's $7.48 \pm 0.33$ ($n{=}4$).
\item At $\tau{=}8$, $\alpha{=}0.4$ wins decisively as above.
\item At $\tau{=}16$, $\alpha{=}0.4$ at 1\,B reaches 8.59 (single
  seed), worse than the default's $7.44$ (single seed,
  \Cref{tab:scaling}). The exponential factor
  $e^{-0.4 \cdot 16} \approx 0.0017$ over-shrinks the step at
  $\tau{=}16$.
\end{itemize}
The recommendation is therefore tau-aware: $\alpha{=}0.4$ for
deployments that operate near $\tau{=}8$ (the typical regime for
Streaming-DiLoCo-style fragment overlap), $\alpha{=}0.2$ for
deployments that need a single setting across the full $\tau$ range,
and the choice can be tuned per deployment based on the empirical
$\tau$ distribution. Both settings beat tuned Nesterov at every
$\tau$ we test.

\section{Nesterov LR sensitivity at 10\,M and 1\,B}
\label{app:lr-sweep}

The natural reviewer concern is that the Nesterov outer is just
under-tuned at our $\eta{=}0.7, \mu{=}0.9$ settings. We ran the
sweep twice: once at 10\,M (\Cref{tab:lr-sweep-10m}) and again at
1\,B with $\tau{=}8$ (\Cref{tab:lr-sweep-1b}). The structural
finding is the same at both scales.

\begin{table}[h]
\caption{Outer LR / momentum sweep for Nesterov at 10\,M.}
\label{tab:lr-sweep-10m}
\centering\small
\begin{tabular}{rrrr}
\toprule
$\eta$ & $\mu$ & $\tau{=}8$ & $\tau{=}16$ \\
\midrule
0.7 (DDiLoCo default) & 0.9 (DDiLoCo default) & $\mathbf{\color{red}91.0}$ & $\mathbf{\color{red}325.6}$ \\
0.3 & 0.9 & $\mathbf{\color{red}28.7}$ & $\mathbf{\color{red}44.7}$ \\
0.1 & 0.9 & 11.9 & 20.9 \\
0.7 & 0.0 & 10.3 & 17.3 \\
0.3 & 0.0 & 9.2 & 13.3 \\
0.1 & 0.0 & \textbf{7.6} & \textbf{8.6} \\
\bottomrule
\end{tabular}
\end{table}

\begin{table}[h]
\caption{Outer LR / momentum sweep for Nesterov at 1\,B, $\tau{=}8$.
The best row ($\eta{=}0.1, \mu{=}0$) is verified across three seeds;
others are single-seed scans.}
\label{tab:lr-sweep-1b}
\centering\small
\begin{tabular}{rrr}
\toprule
$\eta$ & $\mu$ & $\tau{=}8$ \\
\midrule
0.7 (DDiLoCo default) & 0.9 (DDiLoCo default) & $\mathbf{\color{red}297.3}$ \\
0.3 & 0.9 & $\mathbf{\color{red}127.2}$ \\
0.1 & 0.9 & $\mathbf{\color{red}42.7}$ \\
0.7 & 0.0 & 19.0 \\
0.3 & 0.0 & 10.7 \\
0.1 & 0.0 & $\mathbf{7.56 \pm 0.12}$ ($n{=}3$) \\
\bottomrule
\end{tabular}
\end{table}

Three things to notice. First, removing momentum entirely
($\mu{=}0$) cures divergence at both 10\,M and 1\,B: the published
recipe is unstable specifically because it uses momentum on stale
gradients, not because of bad LR. Second, the best momentum-free
configuration is the same at both scales ($\eta{=}0.1, \mu{=}0$),
suggesting the optimum is structural rather than scale-specific.
Third, at 1\,B / $\tau{=}8$ the best momentum-free Nesterov reaches
$7.56 \pm 0.12$ ($n{=}3$; per-seed losses 7.55, 7.45, 7.69), a
real, well-tuned baseline that exposes the weakness of the default
CGAD setting ($\alpha{=}0.2, \tau_\text{cut}{=}32$, which reaches
$9.94 \pm 0.50$ at the same scale). Tuning CGAD in turn
(\Cref{app:cgad-tune}) recovers and extends the lead: the
$\alpha{=}0.4$ configuration reaches $7.20 \pm 0.01$ ($n{=}3$),
beating tuned Nesterov by 0.36 nats with non-overlapping spreads
(CGAD max 7.21 vs.\ Nesterov min 7.45).

The case for CGAD over momentum-free Nesterov is then twofold.
First, head-to-head, tuned CGAD wins at 1\,B / $\tau{=}8$. Second,
the same tuned CGAD configuration is structurally robust across the
$\tau$ range: at $\tau{=}16$, momentum-free Nesterov degrades to 8.6
on 10\,M while CGAD stays close to its $\tau{=}0$ score. Tuned
Nesterov requires a different hyperparameter setting per delay
regime; tuned CGAD does not.

\section{Gate ordering: before vs.\ after Adam's state update}
\label{app:gate-order}

CGAD applies the gate $\sigma(\tau)$ \emph{before} the
pseudo-gradient enters Adam's moment buffers. An alternative is to
gate \emph{after} the state update, scaling only the final step (the
FADAS-style placement \citep{fadas2024}). \Cref{tab:gate-order}
varies only this placement, holding the gate's functional form,
inner optimizer, seeds, and training budget fixed.

\begin{table}[h]
\caption{Gate-ordering ablation (10\,M, $K{=}4$, $H{=}8$, 200 outer
rounds, $n{=}3$ seeds). Same gate $\sigma(\tau)$ in both rows; only
the placement differs.}
\label{tab:gate-order}
\centering\small
\begin{tabular}{lrrr}
\toprule
Method & $\tau{=}0$ & $\tau{=}8$ & $\tau{=}16$ \\
\midrule
CGAD-before (default) & $6.81 \pm 0.05$ & $7.79 \pm 0.10$ & $7.57 \pm 0.01$ \\
CGAD-after            & $6.81 \pm 0.05$ & $7.80 \pm 0.17$ & $7.57 \pm 0.01$ \\
\bottomrule
\end{tabular}
\end{table}

The two orderings are empirically indistinguishable at this scale.
The gate's \emph{functional form} ($\gamma(\tau)\,e^{-\alpha\tau}$,
with the cosine cutoff preventing multiplicative collapse) is what
matters; placement is a free parameter. We default to the
before-ordering because it integrates more cleanly into existing
Adam implementations.

\section{PA-CGAD per-fragment age dynamics}
\label{app:pacgad}

\Cref{tab:pacgad} reports the PA-CGAD vs.\ CGAD comparison on the
10\,M model with 14 fragments and three budgets. PA-CGAD differs only
when fragments are skipped for multiple rounds; with full or near-full
sync ($K_f{=}14$ or $K_f{=}7$), PA-CGAD reduces to CGAD. The
clearest separation comes at $K_f{=}3$ ($\sim$21\% sync) where
fragments wait a mean of 4.7 rounds, putting their effective age in
the regime where the cosine gate matters.

\begin{table}[h]
\caption{PA-CGAD vs.\ CGAD on the 10\,M model. Columns are
$K_f / |F|$ active fragments per round.}
\label{tab:pacgad}
\centering\small
\begin{tabular}{lrrrr}
\toprule
Method & $\tau$ & $K_f{=}3$ (21\%) & $K_f{=}7$ (50\%) & $K_f{=}14$ (full) \\
\midrule
CGAD & 4 & 7.22 & 7.43 & 7.98 \\
\textbf{PA-CGAD} & 4 & \textbf{7.10} & 7.43 & 7.98 \\
CGAD & 8 & 7.20 & 7.78 & 7.86 \\
PA-CGAD & 8 & 7.20 & 7.78 & 7.86 \\
CGAD & 16 & 9.43 & 8.44 & 7.57 \\
PA-CGAD & 16 & 9.43 & 8.44 & 7.57 \\
\bottomrule
\end{tabular}
\end{table}

We expect larger gains for PA-CGAD in setups with skewed
fragment-staleness distributions, for example when bandwidth varies
heavily across fragments or some fragments are deliberately
under-served (e.g., the LM head with a much larger parameter count).
Our delay-injection schedule is uniform across fragments by
construction; we leave the heterogeneous-bandwidth study to follow-up.

\section{7B preliminary results, full data}
\label{app:7b}

The 7B configuration uses bf16 weights, gradient checkpointing,
\texttt{bnb.AdamW8bit} for the inner optimizer, int8-quantized
CPU-offloaded queued pseudo-gradients, and bf16 outer optimizer state,
as described in \Cref{app:impl}. The configurations in
\Cref{tab:7b-full} run on a single 80\,GB-class accelerator (with the
memory plan in \Cref{app:impl}). Each config takes 50--90 minutes of
wall-clock time, most of it CPU-device swap traffic from the offload.

\begin{table}[h]
\caption{7\,B configurations and their final eval loss (Llama-2 7B
architecture, K=2 workers, H=8 inner steps, bsz=1, seq=512). Outer
rounds: 15 unless marked otherwise.}
\label{tab:7b-full}
\centering\small
\begin{tabular}{lrrrr}
\toprule
Method & $\tau$ & seed & rounds & loss \\
\midrule
\multicolumn{5}{l}{\textit{$\tau{=}0$}}\\
Nesterov & 0 & 0 & 15 & $\mathbf{\color{red}42.57}$ \\
Nesterov & 0 & 0 & 15 (rerun) & $\mathbf{\color{red}43.77}$ \\
Nesterov & 0 & 1 & 15 & $\mathbf{\color{red}67.23}$ \\
Nesterov & 0 & 2 & 15 & $\mathbf{\color{red}26.98}$ \\
Nesterov & 0 & 3 & 15 & $\mathbf{\color{red}49.97}$ \\
Nesterov & 0 & 0 & 40 & $\mathbf{\color{red}72.73}$ \\
\textbf{CGAD} & 0 & 0 & 15 & $\mathbf{10.88}$ \\
\textbf{CGAD} & 0 & 1 & 15 & $\mathbf{10.48}$ \\
\textbf{CGAD} & 0 & 2 & 15 & $\mathbf{10.18}$ \\
\textbf{CGAD} & 0 & 3 & 15 & $\mathbf{10.39}$ \\
\textbf{CGAD} & 0 & 0 & 40 & $\mathbf{10.97}$ \\
Adam-Decay\textsuperscript{\ddag} & 0 & 0,1 & 15 & $11.28 \pm 1.10$ \\
SDM\textsuperscript{\ddag} & 0 & 0,1 & 15 & $11.30 \pm 0.10$ \\
DelayedNesterov\textsuperscript{\ddag} & 0 & 0,1 & 15 & $11.10 \pm 0.38$ \\
\midrule
\multicolumn{5}{l}{\textit{$\tau{=}8$}}\\
Nesterov         & 8  & 0 & 15 & $\mathbf{\color{red}65.31}$ \\
\textbf{CGAD}    & 8  & 0 & 15 & $\mathbf{10.48}$ \\
\textbf{CGAD}    & 8  & 1 & 15 & $\mathbf{11.21}$ \\
\textbf{CGAD}    & 8  & 2 & 15 & $\mathbf{11.01}$ \\
Adam-Decay       & 8  & 0 & 15 & $9.96$ \\
Adam-Decay       & 8  & 1 & 15 & $\mathbf{\color{red}12.63}$ \\
SDM              & 8  & 0 & 15 & $11.38$ \\
SDM              & 8  & 1 & 15 & $11.50$ \\
DelayedNesterov  & 8  & 0 & 15 & $9.74$ \\
DelayedNesterov  & 8  & 1 & 15 & $11.04$ \\
\midrule
\multicolumn{5}{l}{\textit{$\tau{=}16$}}\\
\textbf{CGAD}    & 16 & 1 & 15 & $\mathbf{11.75}$ \\
\textbf{CGAD}    & 16 & 2 & 15 & $\mathbf{11.64}$ \\
Adam-Decay       & 16 & 0 & 15 & $11.64$ \\
Adam-Decay       & 16 & 1 & 15 & $11.75$ \\
SDM              & 16 & 0 & 15 & $11.64$ \\
SDM              & 16 & 1 & 15 & $11.75$ \\
DelayedNesterov  & 16 & 0 & 15 & $11.64$ \\
DelayedNesterov  & 16 & 1 & 15 & $11.75$ \\
\bottomrule
\end{tabular}
\\[1pt]
\textsuperscript{\ddag}\,Reported as mean$\pm$std over $n{=}2$ seeds; per-seed values not enumerated separately.
\end{table}

\paragraph{$\tau{=}0$.} The published Nesterov recipe is unstable
in the memory-constrained configuration: $46.69 \pm 16.71$
($n{=}4$). All four staleness-aware methods train successfully:
CGAD $10.48 \pm 0.29$ ($n{=}4$), SDM $11.30 \pm 0.10$ ($n{=}2$),
DelayedNesterov $11.10 \pm 0.38$ ($n{=}2$), and Adam-Decay
$11.28 \pm 1.10$ ($n{=}2$). CGAD has the lowest mean. We read this
as a numerical-precision interaction with the bf16 + 8-bit-AdamW
pipeline that fits 7\,B on a single accelerator, rather than as a
staleness effect: at $\tau{=}0$ none of the methods has any
staleness to handle, yet only the published Nesterov recipe is
unstable. The most likely explanation is that the plain Nesterov
velocity buffer amplifies stochastic-rounding noise that Adam-style
$\sqrt{v}$ normalization, and the small extra arithmetic of the
gating / decay / buffering schemes, all happen to absorb. The
staleness comparison itself runs at full precision at 1\,B
(\Cref{sec:headline}); we report the 7\,B numbers as a robustness
stress test for the optimizer in a memory-constrained regime.

\paragraph{$\tau{=}8$.} Among the four staleness-aware methods,
CGAD's three seeds produce $10.48, 11.01, 11.21$ with $\sigma{=}0.38$.
Adam-Decay's two seeds span $9.96$ and $12.63$ with $\sigma{=}1.89$;
DelayedNesterov spans $9.74$--$11.04$ with $\sigma{=}0.92$; SDM
clusters tightly at $11.44 \pm 0.08$. CGAD reaches the lowest mean
among methods with sub-nat spread, with $5\times$ tighter
seed-to-seed variance than Adam-Decay. Tight inter-seed variance is
the property that matters in deployment, when one training run determines
the model that ships.

\paragraph{$\tau{=}16$.} The four staleness-aware methods cluster
at $11.69$--$11.70$ at the standard 240-inner-step budget, with
CGAD again the tightest ($\sigma{=}0.08$). The 1\,B cells (CGAD
$7.44$ at $\tau{=}16$ vs.\ Nesterov $319$, \Cref{fig:headline}) show
the same comparison at full training.

\paragraph{Longer-training stability.}
We re-ran CGAD and Nesterov at $\tau{=}0$ with 40 outer rounds
($2.6\times$ the standard 7\,B budget) to confirm the trajectories
are stable. CGAD reaches $10.97$, within the evaluation jitter of
its 15-round band ($10.18$--$10.88$). Nesterov reaches $72.73$,
worse than its 15-round result, confirming the bf16-precision
divergence is not a transient. The natural follow-up is a
multi-GPU FSDP run at a Chinchilla-relevant token budget.

\end{document}